\newtheorem{theorem}{Theorem}
\title{\LARGE \textbf{ManeuverGPT\\Agentic Control for Safe Autonomous Stunt
Maneuvers} }
\author{ Shawn Azdam$^{2,3,4}$, Pranav Doma$^{1,2}$, and Aliasghar Moj Arab $^{1,2,3}$%
\thanks{$^{1}$Department of Mechanical and Aerospace Engineering, New York
University, New York, NY 10012, USA.}%
\thanks{$^{2}$Agile Safe Autonomous Systems (ASAS) Lab, Tandon School of
Engineering, New York University, Brooklyn, NY 11201, USA.}%
\thanks{$^{3}$General Autonomy Inc., 201 Centennial Ave. Piscataway, NJ, 08854, USA.}%
\thanks{$^{4}$Azdam AI}%
\thanks{Emails: \texttt{shawn@azdam.com}, \texttt{pd2365@nyu.edu}, \texttt{aliasghar.arab@nyu.edu, mojarab@genauto.ai}.}%
}
\begin{document}
  \maketitle
  \thispagestyle{empty}
  \pagestyle{empty}

  \begin{abstract}
    The next generation of active safety features in autonomous vehicles should
    be capable of safely executing evasive hazard-avoidance maneuvers—akin to those
    performed by professional stunt drivers—to achieve high-agility motion at the
    limits of vehicle handling.

    This paper presents a novel framework, \emph{ManeuverGPT}, for generating
    and executing high-dynamic stunt maneuvers in autonomous vehicles using
    large language model (LLM)-based agents as controllers. We target aggressive
    maneuvers, such as J-turns, within the CARLA simulation environment and demonstrate
    an iterative, prompt-based approach to refine vehicle control parameters, starting
    \emph{tabula rasa} without retraining model weights. We propose an \emph{agentic}
    architecture comprised of three specialized agents (1) a \emph{Query
    Enricher Agent} for contextualizing user commands, (2) a \emph{Driver Agent}
    for generating maneuver parameters, and (3) a \emph{Parameter Validator
    Agent} that enforces physics-based and safety constraints. Experimental
    results demonstrate successful J-turn execution across multiple vehicle
    models through textual prompts that adapt to differing vehicle dynamics. We
    evaluate performance via established success criteria and discuss limitations
    regarding numeric precision and scenario complexity. Our findings
    underscore the potential of LLM-driven control for flexible, high-dynamic maneuvers,
    while highlighting the importance of hybrid approaches that combine
    language-based reasoning with algorithmic validation.
  \end{abstract}


  \section{Introduction}
  Autonomous vehicles (AVs) are advancing rapidly, offering potential benefits such
  as reduced traffic congestion, lower accident rates, and enhanced mobility.
  Integrating human-inspired active safety features derived from evasive hazard
  avoidance maneuvers, like those performed by professional stunt drivers,
  enables high-agility motion at the edge of handling limits to support the
  development of next-generation ``accident-free'' vehicles. Despite these advantages,
  automated driving still faces critical challenges in executing high-dynamic
  maneuvers under uncertain and varying conditions.

  One representative example is the stunt J-turn, a rapid $180^{\circ}$ rotation
  of the vehicle at speed as shown in Figure~\ref{fig:maneuver}. The safe
  execution of these maneuvers can expand the maneuverability envelope, enabling
  effective hazard evasion in emergency scenarios~\cite{arab2024high, arab2024safe}. However,
  designing a reliable controller to perform such a maneuver is challenging,
  due to the complex dynamics involved and the narrow margin for error. To address
  these challenges, we developed an agentic architecture that incorporates
  foundation models~\cite{bommasani2021opportunities} to assist in controllers
  for stunt maneuvers, named ManeuverGPT.

  Traditional approaches to such maneuvers often rely on carefully tuned controllers
  or reinforcement learning (RL) methods that require extensive training data,
  environment interactions, and domain-specific engineering~\cite{zhao2024adaptive,
  selvaraj2022ml}. Moreover, adapting to new vehicle dynamics can demand significant
  retraining or model redesign~\cite{chen2024deep, wang2018automated} . Recent advances
  in Large Language Models (LLMs) offer an alternative strategy for finding control
  policies for complex maneuvers without extensive retraining. Pre-trained LLMs
  have shown promising results in tasks such as planning, code synthesis, and
  robotic instruction following ~\cite{jiang2024self, quartey2024verifiably}.

  \begin{figure}[ht]
    \centering
    \includegraphics[width=\columnwidth]{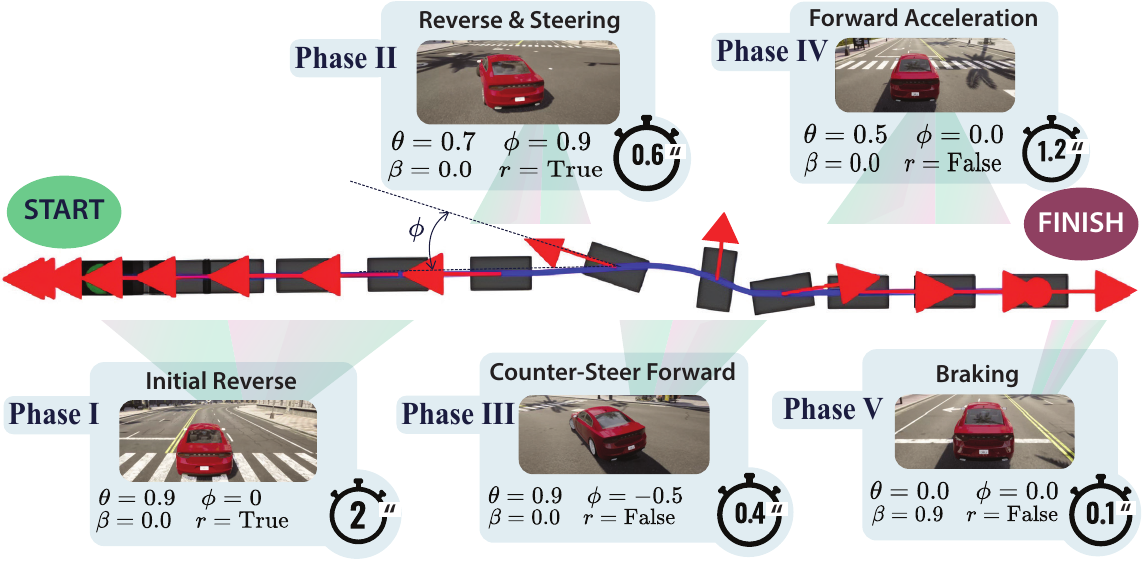}
    \caption{ The five‐phase J‐turn maneuver executed by our \emph{ManeuverGPT}
    controller. Phase~I begins with an initial reverse, followed by reverse
    steering (Phase~II), counter‐steering forward (Phase~III), forward
    acceleration (Phase~IV), and concludes with braking (Phase~V). Throttle~($\theta$),
    steering~($\phi$), brake~($\beta$), and reverse~($r$) control parameters are
    annotated throughout, culminating in a $180^{\circ}$ turnaround. }
    \label{fig:maneuver}
  \end{figure}

  In this work, we explore whether LLM agents can generate and refine \emph{control
  commands} for stunt driving maneuvers by iteratively adjusting key vehicle
  parameters and textual prompts to account for variations in dynamics and
  environmental conditions, starting \emph{tabula rasa}. We propose a novel framework,
  \emph{ManeuverGPT}, that incorporates three specialized LLM-driven agents in
  a closed-loop architecture.

  The main contributions of this work are threefold: (1) a multi-agent LLM-driven
  framework for generating stunt maneuvers without retraining model parameters,
  (2) a demonstration of generalization across multiple vehicle types in a
  simulator by adjusting only the textual prompts, and (3) a performance analysis
  using established success metrics (final orientation, collision checks, and
  time constraints).

  We also discuss limitations related to numeric precision and scenario complexity,
  which point to the need for hybrid methods that combine language-based
  reasoning with conventional control theory.

  The remainder of this paper is organized as follows. We review related work on
  LLMs in robotics and agentic architectures for autonomous vehicles. We then
  formally describe our framework, including its theoretical underpinnings and an
  iterative validation pipeline. Experimental results in CARLA demonstrate how
  prompt-based adjustments suffice to achieve feasible execution across
  different vehicle models. We conclude by highlighting open research directions,
  particularly in bridging simulation-to-reality gaps and ensuring robust
  safety guarantees for high-speed maneuvers.

  We focus on the J-turn maneuver~\cite{arab2021instructed,shahabi2021dynamic}~
  in the CARLA simulation environment~\cite{dosovitskiy2017carla} and evaluate how
  textual refinements alone (as opposed to gradient-based learning) can produce
  successful $180^{\circ}$ reorientations within specified time limits and with
  collision-free execution. The source code for our implementation is available
  at
  \href{https://github.com/SHi-ON/ManeuverGPT}{https://github.com/SHi-ON/ManeuverGPT}


  \section{Related Work}
  Recent advances in large language models (LLMs) have spurred research into
  their application for robotics control~\cite{zeng2023large}. Several works have
  demonstrated that LLMs can extract actionable knowledge for embodied agents,
  enabling zero-shot planning and reasoning in complex scenarios~\cite{huang2022language,
  kojima2022large}.

  LanguageMPC~\cite{sha2023languagempc} showed direct translation of linguistic
  decisions to model predictive control (MPC) parameters, reducing navigation costs
  compared to conventional controllers. The LLM4AD architecture~\cite{cui2024large},
  showed how natural language commands can be transformed into vehicle control parameters
  through structured reasoning processes—a conceptual approach our work extends
  to the domain of high-dynamic maneuvers.

  Traditional MPC techniques, as applied in autonomous driving systems~\cite{nguyen2023lateral},
  offer strong guarantees in terms of constraint handling and optimality, but
  they rely heavily on precise dynamic models that may not capture extreme
  driving behaviors. Arab~\cite{Arab2024Motion} demonstrated MPC's capability for
  extreme maneuvers through a sparse stable-trees algorithm, achieving high-agility
  maneuvers in 1/7-scale vehicle tests while maintaining safety via augmented
  stability regions. In contrast, deep RL methods have shown promising results
  for aggressive maneuvers such as drifting and J-turns~\cite{chen2024deep} ,
  yet they often lack formal safety guarantees.

  Emerging hybrid approaches combine linguistic reasoning with physical
  constraints. Chen et al.~\cite{chen2024async} developed AsyncDriver with
  decoupled 2Hz LLM/20Hz planner operation, reducing computational load by 63\%.
  Long et al.~\cite{long2024vlmmpc} integrated VLMs with MPC to improve safety
  margins by 38\% in adverse conditions through visual-language parameter generation.

  In summary, while each of these paradigms—LLM-based control, MPC, and RL—has
  its own advantages, our work combines the adaptability of LLMs with rigorous safety
  checks through phase-optimized parameter generation and multi-stage validation,
  advancing beyond existing hybrid approaches~\cite{sha2023languagempc,chen2024async}.


  \section{Methodology}

  Our framework, illustrated in Fig.~\ref{fig:agentic}, comprises three collaborative
  LLM-driven agents coordinated by a central unit:

  \begin{enumerate}
    \item \textbf{Query Enricher Agent} ($\mathcal{A}_{E}$): Interprets and augments
      user prompts, yielding an enriched query $E(Q)$ that includes domain-specific
      details. By incorporating historical data, environmental inputs, and operational
      constraints, it transforms vague instructions into detailed, actionable queries
      that capture necessary parameters for maneuver planning.

    \item \textbf{Driver Agent} ($\mathcal{A}_{D}$): Building on the enriched query,
      the Driver Agent generates a \emph{structured} sequence of maneuver parameters
      $P$. It produces candidate plans that align with the intended stunt
      maneuver while balancing precision and adaptability.

    \item \textbf{Parameter Validator Agent} ($\mathcal{A}_{V}$): Evaluates candidate
      parameters $P$ against a set of constraints $\mathcal{C}= \{C_{s}, C_{o}\}$
      where $C_{s}$ enforces safety requirements and $C_{o}$ checks operational
      limits. This agent either refines or rejects invalid parameters.
  \end{enumerate}
  A central orchestration unit manages agent interactions in a closed-loop process.
  If the Parameter Validator identifies issues, the system triggers re-evaluation
  through the Driver Agent or requests clarification from the Query Enricher.
  This feedback mechanism, based on the validation outcome and simulator results,
  ensures that all generated maneuvers consistently meet safety and performance
  standards.

  \begin{figure}[t]
    \centering
    \includegraphics[width=\columnwidth]{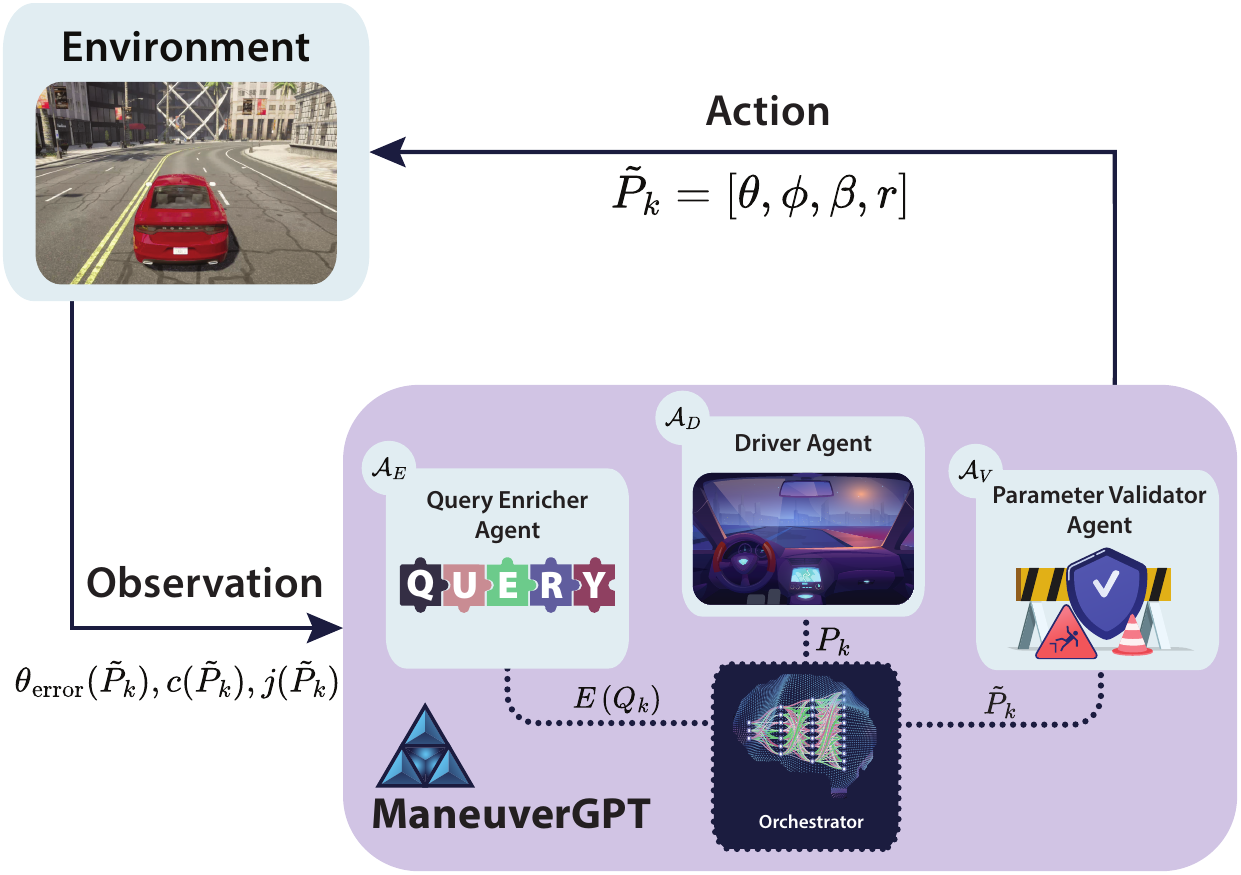}
    \caption{Overview of the proposed agentic framework. The system comprises three
    collaborative agents: the \textbf{Query Enricher Agent} ($\mathcal{A}_{E}$)
    for refining user commands, the \textbf{Driver Agent} ($\mathcal{A}_{D}$) for
    generating maneuver parameters, and the \textbf{Parameter Validator Agent} ($\mathcal{A}
    _{V}$) for enforcing safety and feasibility constraints. The \textbf{Orchestrator}
    coordinates agent interactions, forming a closed-loop process where
    observations guide iterative control refinements.}
    \label{fig:agentic}
  \end{figure}

  To formally capture the stunt maneuvers addressed by our agents, let the
  state of the autonomous vehicle at time $t$ be denoted by
  $\boldsymbol{x}(t) \in \mathcal{X}\subseteq \mathbb{R}^{n},$ where
  $\boldsymbol{x}$ is the state-space vector of dimension $n$ and $\mathcal{X}$
  is the feasible state space. Vehicle maneuvers are controlled through the input
  vector $\boldsymbol{u}(t)$ as
  \[
    \boldsymbol{u}(t) \;=\; [\theta(t), \phi(t), \beta(t), r(t)] \in \mathcal{U}
    ,
  \]
  where $\theta$ is the normalized throttle rate, $\phi$ is the normalized
  steering rate, $\beta$ is the normalized brake intensity, and $r$ indicates
  reverse status. The vehicle dynamics are represented as the nonlinear system
  \begin{equation}
    \boldsymbol{\dot{x}}(t) = \boldsymbol{f}\bigl(\boldsymbol{x}(t), \boldsymbol
    {u}(t)\bigr), \label{eq:vehicle_dynamics}
  \end{equation}
  with $\boldsymbol{f}(\cdot)$ capturing the kinematic, dynamic, and tire-road interaction
  effects studied in our earlier work~\cite{Arab2024Motion}. We assume
  $\boldsymbol{f}$ is continuous and locally Lipschitz in both $\boldsymbol{x}$
  and $\boldsymbol{u}$.

  The velocity components of state $\boldsymbol{x}(t)$ include longitudinal
  velocity $\boldsymbol{v}_{x}$, lateral velocity $\boldsymbol{v}_{y}$, and
  rotational velocity $\boldsymbol{\omega}$. These components follow coupled differential
  relationships that can be expressed as specific components of the general function
  $\boldsymbol{f}$~\cite{vehicle}:
  \begin{align}
    \dot{\boldsymbol{v}}_{x}  & = \boldsymbol{a}_{x}= \boldsymbol{f}_{v}(\theta, \beta, r) - \boldsymbol{\omega}\boldsymbol{v}_{y},         \\
    \dot{\boldsymbol{v}}_{y}  & = \boldsymbol{a}_{y}= \boldsymbol{f}_{l}(\phi, \boldsymbol{v}_{x}) + \boldsymbol{\omega}\boldsymbol{v}_{x}, \\
    \dot{\boldsymbol{\omega}} & = \boldsymbol{\alpha}= \boldsymbol{f}_{r}(\phi, \boldsymbol{v}_{x}, \boldsymbol{v}_{y}),
  \end{align}
  where $\boldsymbol{f}_{v}$, $\boldsymbol{f}_{l}$, and $\boldsymbol{f}_{r}$
  are the component functions of $\boldsymbol{f}$ that govern longitudinal, lateral,
  and rotational accelerations, respectively.

  A stunt maneuver is \emph{feasible} if it satisfies:
  \begin{itemize}
    \item \textbf{Physical Constraints:} Steering angles, throttle, brake, and reverse
      commands remain within manufacturer or simulation limits.

    \item \textbf{Safety Constraints:} No collisions occur during execution.

    \item \textbf{Performance Criteria:} The final state satisfies goal
      conditions (e.g., finishing a J-turn at $180^{\circ}\pm \Delta_{\theta}$).
  \end{itemize}

  We define a discrete iteration $k \in \{1, 2, \dots\}$ as follows:
  \begin{enumerate}
    \item Generate parameters $P_{k}= \mathcal{A}_{D}\bigl(E(Q_{k})\bigr)$.

    \item Validate $P_{k}$ to obtain $\tilde{P}_{k}$ via $\mathcal{A}_{V}$,
      ensuring that constraints are met.

    \item Run a simulation with $\tilde{P}_{k}$ to measure performance metrics (heading
      angle error, collisions, etc.).

    \item Produce feedback $Q_{k+1}$ based on the measured performance.
  \end{enumerate}
  This yields a feedback-driven sequence $\{Q_{k}\}$ and $\{P_{k}\}$.

  In each iteration, we evaluate how well $\tilde{P}_{k}$ meets stunt
  objectives via the cost function
  \[
    L(\tilde{P}_{k}) \;=\; \alpha_{1}\,\bigl|\theta_{\mathrm{error}}(\tilde{P}_{k}
    )\bigr| \;+\;\alpha_{2}\,c(\tilde{P}_{k}) \;+\;\alpha_{3}\,j(\tilde{P}_{k}),
  \]
  where $\theta_{\mathrm{error}}(\tilde{P}_{k})$ is the final heading deviation,
  $c(\tilde{P}_{k}) \in \{0,1\}$ indicates whether a collision occurred,
  $j(\tilde{P}_{k})$ is a measure of jerk (i.e., derivative of acceleration), and
  $\alpha_{1,2,3}\ge 0$ are weighting coefficients. Minimizing
  $L(\tilde{P}_{k})$ aims to achieve precise reorientation, avoid collisions,
  and maintain smooth maneuvers.

  Some reinforcement learning approaches define a reward $R(\tilde{P}_{k})$ to
  be maximized rather than a cost to be minimized. One possible definition is:
  \[
    R(\tilde{P}_{k}) = \alpha_{1}\,\bigl(180^{\circ}- \lvert \theta_{\mathrm{error}}
    (\tilde{P}_{k})\rvert\bigr) + \alpha_{2}\,(1 - c(\tilde{P}_{k})) - \alpha_{3}
    \,j(\tilde{P}_{k}),
  \]
  using the same weighting coefficients $\alpha_{1,2,3}$ as in the cost
  function. In this formulation, increasing $R$ is effectively equivalent to
  decreasing $L$. Our iterative algorithm remains cost-based but could be adapted
  to reward-based methods by taking $L \propto -R$.

  \begin{algorithm}
    [ht]
    \caption{ManeuverGPT}
    \label{alg:agentic} \DontPrintSemicolon \SetKwInOut{Input}{Input} \SetKwInOut{Output}{Output}

    \Input{ - User command $Q_{1}$ \\ - Constraints $\mathcal{C}= \{C_{s},C_{o}\}$ \\ - Maximum iterations $k_{\max}$ \\ - Cost threshold $\varepsilon$ }
    \Output{Feasible parameter set $\tilde{P}_{k}$ (or best-effort parameters)}
    \BlankLine

    \nl $k \gets 1$\; \nl $\tilde{P}_{\text{best}}\gets \emptyset$; $L_{\text{best}}
    \gets \infty$\; \nl \While{$k \le k_{\max}$}{ \nl $E(Q_{k}) \leftarrow \mathcal{A}_{E}(Q_{k})$ \; \nl $P_{k}\leftarrow \mathcal{A}_{D}\bigl(E(Q_{k})\bigr)$ \; \nl $\tilde{P}_{k}\leftarrow \mathcal{A}_{V}(P_{k})$ \; \nl \If{$L(\tilde{P}_{k}) \leq L_{\text{best}}$}{ \nl $\tilde{P}_{\text{best}}\gets \tilde{P}_{k}$; $L_{\text{best}}\gets L(\tilde{P}_{k})$\; } \nl \If{ $L(\tilde{P}_{k}) \le \varepsilon$ }{ \textbf{return} $\tilde{P}_{k}$ \tcp*{Satisfactory solution} } \nl \Else{ \nl $Q_{k+1}\gets \text{Feedback}(Q_{k}, \theta_{\text{error}}(\tilde{P}_{k}), c(\tilde{P}_{k}), j(\tilde{P}_{k}))$\; \nl $k \gets k+1$\; } }
    \nl \textbf{return} $\tilde{P}_{\text{best}}$ \tcp*{Best-effort solution}
  \end{algorithm}

  \begin{theorem}[Finite-Time Feasibility]
    \label{thm:finite_time_feasibility} Suppose that:
    \begin{enumerate}
      \item The validation operation $\mathcal{A}_{V}$ ensures $\tilde{P}_{k}$ remains
        in a compact set $\mathcal{U}_{\mathrm{safe}}$.

      \item The feedback reduces $L(\tilde{P}_{k})$ by at least a constant $\delta
        > 0$ whenever $L(\tilde{P}_{k}) > \varepsilon > 0$.

      \item The environment simulation is deterministic with respect to $(x_{0},
        \tilde{P}_{k})$.
    \end{enumerate}
    Then, there exists an integer $K$ such that, for all $k \ge K$, the parameter
    set $\tilde{P}_{k}$ is feasible (i.e., satisfies stunt requirements), or
    the user terminates the procedure after a finite number of iterations.
  \end{theorem}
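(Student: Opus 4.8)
The plan is to prove this as a straightforward monotone-descent argument. The core observation is that assumption (2) gives us a fixed per-iteration decrease $\delta > 0$ in the cost $L$ as long as we remain above threshold $\varepsilon$. This is essentially a discrete-time analogue of a Lyapunov descent condition, and the conclusion follows from the fact that a sequence decreasing by at least a constant amount cannot stay bounded below indefinitely.

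First I would establish that the iterates are well-defined: by assumption (1), each $\tilde{P}_k$ produced by $\mathcal{A}_V$ lies in the compact set $\mathcal{U}_{\mathrm{safe}}$, so the physical and safety constraints of feasibility are satisfied by construction at every step, and the cost $L(\tilde{P}_k)$ is well-defined and bounded below by $0$ (since it is a nonnegative combination of $|\theta_{\mathrm{error}}|$, an indicator $c \in \{0,1\}$, and a jerk term $j \ge 0$, with weights $\alpha_{1,2,3} \ge 0$). Assumption (3) guarantees the simulation returns a deterministic $L(\tilde{P}_k)$ for each $(x_0, \tilde{P}_k)$, so the feedback map and the descent quantity are well-posed and repeatable — this rules out the sequence oscillating due to stochastic evaluation.

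Next I would run the descent argument by contradiction. Suppose $L(\tilde{P}_k) > \varepsilon$ for all $k < K$ for every candidate bound $K$. Then assumption (2) applies at each such step, giving the telescoping estimate
\[
L(\tilde{P}_{k}) \;\le\; L(\tilde{P}_{1}) \;-\; (k-1)\,\delta .
\]
The right-hand side is driven below $\varepsilon$ — indeed below $0$ — once $k-1 > (L(\tilde{P}_1) - \varepsilon)/\delta$, contradicting both $L > \varepsilon$ and the lower bound $L \ge 0$. Hence there must exist a finite
\[
K \;\le\; 1 + \left\lceil \frac{L(\tilde{P}_{1}) - \varepsilon}{\delta} \right\rceil
\]
at which $L(\tilde{P}_{K}) \le \varepsilon$. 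I would then invoke the performance-criterion definition of feasibility: reaching $L \le \varepsilon$ means the weighted heading error, collision, and jerk costs are jointly within tolerance, so the final orientation lies within $180^\circ \pm \Delta_\theta$ and no collision occurs, i.e. $\tilde{P}_K$ is feasible. Combined with the $\mathcal{U}_{\mathrm{safe}}$ membership from assumption (1), all three feasibility conditions hold. The alternative clause of the theorem — termination by the user after finitely many iterations — is the trivial fallback covered by the $k_{\max}$ bound in Algorithm~\ref{alg:agentic}, and requires no argument.

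The main obstacle, honestly, is not the arithmetic but the legitimacy of assumption (2): the claim that a single language-model feedback step strictly reduces $L$ by a uniform constant $\delta$ is the substantive, hard-to-guarantee hypothesis, and a careful treatment should flag that the theorem's force rests entirely on this being justified (empirically or otherwise) rather than on the descent bookkeeping, which is elementary. I would note that if assumption (2) is relaxed to mere monotone non-increase without a uniform gap, the finite-time conclusion fails and only asymptotic convergence of $\{L(\tilde{P}_k)\}$ to its infimum can be claimed; the uniform $\delta$ is precisely what converts asymptotic improvement into a finite iteration count.
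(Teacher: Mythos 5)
Your proposal is correct and follows essentially the same route as the paper's own proof: a monotone-descent argument in which the uniform per-iteration decrease $\delta$ combined with the lower bound $L \ge 0$ forces $L(\tilde{P}_{k})$ below $\varepsilon$ after at most $\lceil (L(\tilde{P}_{1})-\varepsilon)/\delta\rceil$ steps, with user termination as the fallback. You simply make explicit the telescoping bound and the resulting iteration count that the paper leaves as ``a simple monotonicity argument,'' and your closing caveat about assumption~2 being the substantive hypothesis is well placed.
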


  \begin{proof}
    Let $L(\tilde{P}_{k})$ be nonnegative. Each iteration reduces
    $L(\tilde{P}_{k})$ by at least $\delta$ when
    $L(\tilde{P}_{k}) > \varepsilon$. Since $L$ is bounded below by zero, a
    simple monotonicity argument shows it must reach feasibility (i.e., drop below
    the $\varepsilon$ threshold) within a finite number of steps, or else the user
    halts after some finite $k$.
  \end{proof}

  \noindent
  In our framework, the cost reduction guarantee in assumption 2 is realized through
  structured prompt refinement. When $L(\tilde{P}_{k}) > \varepsilon$, the Driver
  Agent ($\mathcal{A}_{D}$) receives specific feedback about performance gaps (e.g.,
  ``reduce steering angle by 5-10\% to minimize overshoot''). This targeted feedback,
  combined with the Parameter Validator's constraints, ensures progressive improvement
  in subsequent iterations. For example, prompt refinements might include:

  \begin{quote}
    \small \textit{Initial prompt:} ``Execute a J-turn maneuver.''\\ \textit{Refined
    prompt after feedback:} ``Execute a J-turn maneuver with gentler steering during
    phase 2 (currently overshooting by 12°) and increase brake intensity to 0.7
    during final stabilization.''
  \end{quote}

  \noindent
  This structure guarantees that each iteration addresses specific performance
  deficits, ensuring the $\delta$ improvement.

  \noindent
  \emph{Remark:} Theorem~\ref{thm:finite_time_feasibility} guarantees
  feasibility but not \emph{optimality} in the control-theoretic sense (e.g.,
  minimal torque or minimal time). It establishes that our iterative prompting
  approach can identify constraint-satisfying solutions under mild assumptions,
  which aligns with our goal of achieving safe executable maneuvers rather than
  mathematically optimal trajectories. This synergy between language-based
  generation and physically motivated cost functions is validated empirically
  in the next section.


  \section{Experimental Setup}
  Experiments were conducted in the CARLA simulator (v0.9.14), which provides
  high-fidelity vehicle dynamics and sensor modeling for testing complex
  maneuvers like J-turns.

  Performance was quantified using the following metrics: \textbf{Angle Error}
  is defined as the difference between the achieved turn angle and the ideal
  $180^{\circ}$ (Optimal: $\leq 3^{\circ}$, Acceptable: $\leq 10^{\circ}$);
  \textbf{Success Rate} denotes the percentage of trials with an angle error below
  $10^{\circ}$; \textbf{Yaw Rate} represents the angular velocity around the vertical
  axis (degrees per second); \textbf{Jerk} quantifies the rate of change of acceleration
  as a proxy measure of smoothness; \textbf{Steering Smoothness} is given by the
  inverse of the mean absolute yaw changes; \textbf{Execution Time} is the duration
  required to complete the maneuver; and \textbf{Collision Detection} is a binary
  metric assessing collision-free operation. These metrics provide a
  multifaceted evaluation of both technical performance and real-world
  applicability.

  Our system architecture integrates a GPT-family model as the core of the
  language agents. The agents exchange and share message context through an orchestrator
  component built on top of an in-memory database for asynchronous processing.
  We adopt a phase-based control protocol that plans complete trajectories
  rather than making frame-by-frame decisions. This approach enables high-level
  maneuver planning while maintaining computational efficiency and responsiveness.

  To ensure reliable and safe operation, we implemented multiple protective
  mechanisms throughout the experimental framework. Control parameter validation
  constrained all inputs within physical limits (throttle/brake: [0,1],
  steering: [-1,1]). Prompt-based safety constraints provided explicit
  instructions for vehicle stability during maneuvers. We implemented collision
  detection that terminates trials upon impact and provides negative feedback to
  the model. We also repeat the experiment from identical initial states across
  multiple runs to test iterative improvement.


  \section{Results and Discussion}
  \label{sec:results}

  Our investigation of LLM-based controllers for J-turn maneuvers revealed
  insights across vehicle dynamics, architectural implementations, and cross-vehicle
  adaptability. The controller achieved an overall 86\% implementation success
  rate across 100 runs, improving from 83.3\% in early trials to 90\% in later trials
  (Table~\ref{tab:veh}).

  \begin{table}[ht]
    \centering
    \caption{Parameter Execution Performance (Averaged over 100 Runs)}
    \label{tab:veh} \scriptsize
    \resizebox{\columnwidth}{!}{
    \begin{tabular}{|l|p{1.5cm}|p{1.5cm}|p{1.5cm}|p{1.5cm}|}
      \hline
      \textbf{Batch}   & \textbf{Total Parameters} & \textbf{Implemented} & \textbf{Rejected} & \textbf{Success (\%)} \\
      \hline
      Overall          & 100                       & 86                   & 14                & 86.0\%                \\
      \hline
      Early (first 60) & 60                        & 50                   & 10                & 83.3\%                \\
      \hline
      Later (last 40)  & 40                        & 36                   & 4                 & 90.0\%                \\
      \hline
    \end{tabular}
    }
  \end{table}

  \begin{figure}[t]
    \centering
    \includegraphics[width=\columnwidth]{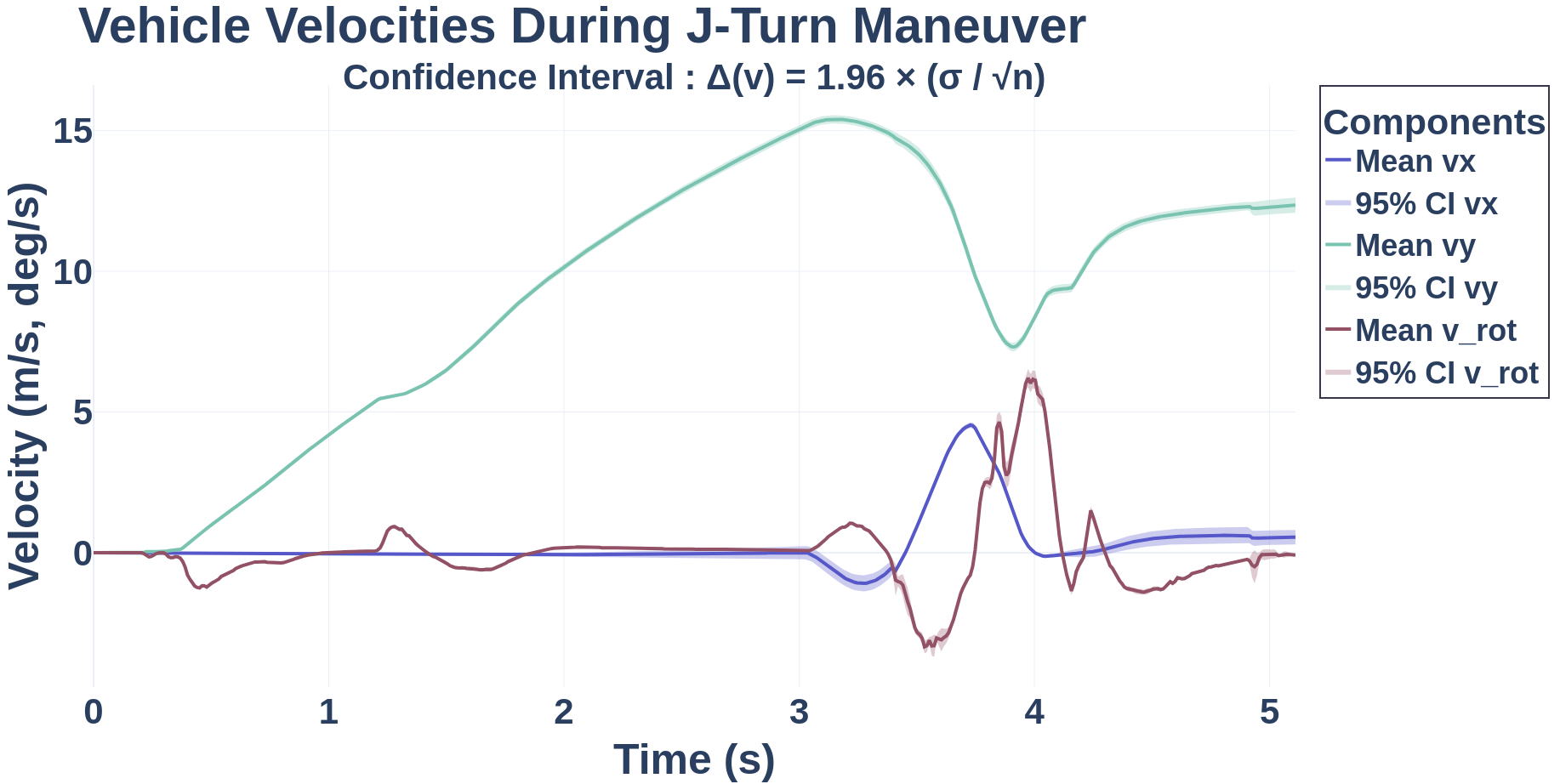}
    \caption{Time series of vehicle velocities during a J-turn maneuver, showing
    longitudinal ($v_{x}$ in m/s), lateral ($v_{y}$ in m/s), and rotational ($v_{rot}$
    in deg/s) velocity components with their respective 95\% confidence
    intervals (CI). The CI range is computed as $\Delta(v) = 1.96 \times (\sigma
    / \sqrt{n})$, where $\sigma$ is the standard deviation and $n$ is the
    number of trials.}
    \label{fig:vehicle_velocities}
  \end{figure}

  The velocity profile in Figure~\ref{fig:vehicle_velocities} reveals key insights
  about J-turn execution. During the acceleration phase (0-1.5s), the
  controller prioritizes longitudinal acceleration to approximately 15~m/s
  while maintaining minimal lateral and rotational velocities, demonstrating understanding
  that forward momentum is necessary before turning. In turn initiation (1.5-3s),
  we observe gradual lateral velocity development, indicating controlled
  steering without destabilizing abrupt changes. The maximum rotation phase (3-4s)
  shows peak rotational velocity (approximately 6~deg/s) coinciding with significant
  longitudinal velocity drop and maximum lateral velocity—representing the core
  directional change. The stabilization phase (4-5s) exhibits damping
  oscillations as the controller stabilizes the vehicle, with longitudinal velocity
  recovering to approximately 12~m/s. Narrow confidence intervals for
  longitudinal and lateral velocities indicate consistent performance, while wider
  intervals for rotational velocity suggest variability due to traction conditions
  and control timing differences.

  Our multi-agent architecture substantially outperformed a single-agent implementation.
  For this architectural comparison, we employed a more stringent intermediate threshold
  of $7^{\circ}$ (rather than the $10^{\circ}$ used in vehicle comparisons), as
  both implementations tested the same vehicle type (sedan). The multi-agent system
  maintained error below $7^{\circ}$ for 76\% of trials, compared to only 52\%
  with the single-agent system. Furthermore, the multi-agent system achieved
  optimal performance (less than $3^{\circ}$) in 31\% of trials, versus 18\% for
  the single-agent approach.

  \begin{figure}[t]
    \centering
    \includegraphics[width=0.5\textwidth]{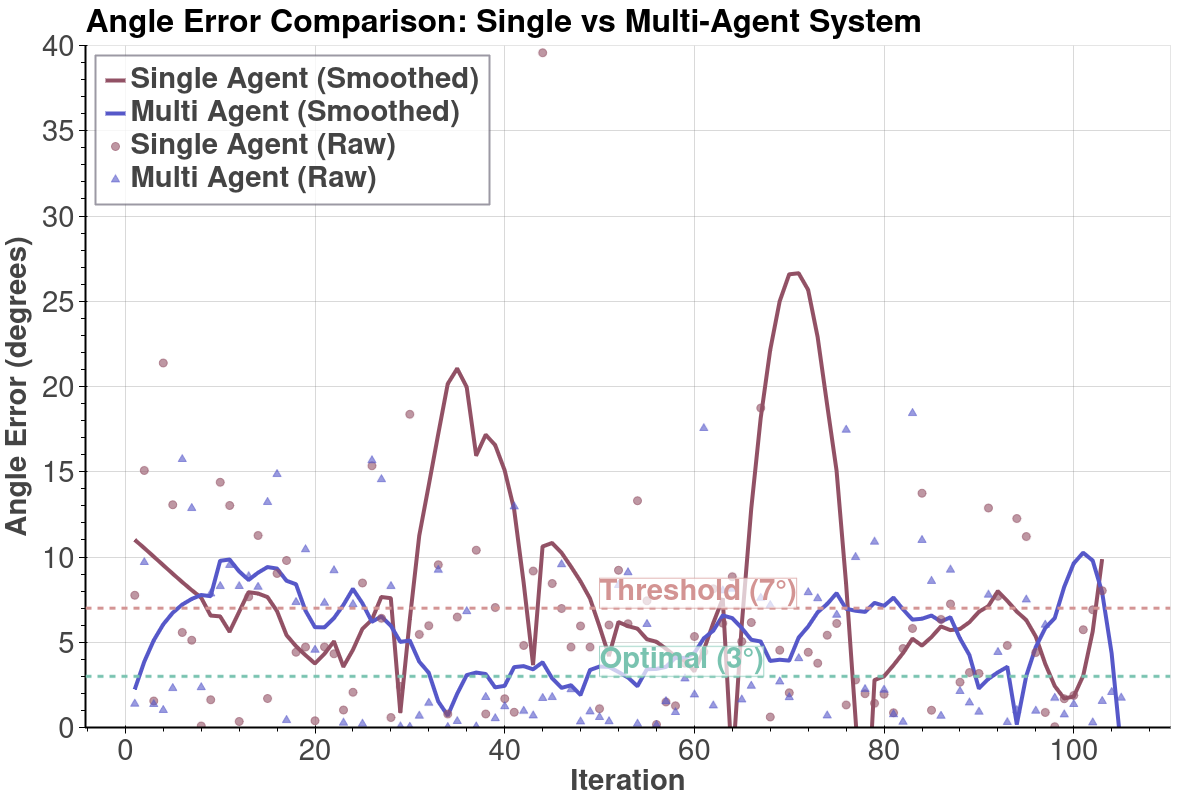}
    \caption{Angle error comparison between single-agent and multi-agent
    systems. The raw and smoothed angle errors are plotted for both systems across
    multiple trials. The $7^{\circ}$ threshold (dashed purple line) represents
    the acceptable error limit, while the $3^{\circ}$ optimal error (dashed
    green line) indicates the desired accuracy.}
    \label{fig:angle_error_single_vs_multi_agent}
  \end{figure}

  As shown in Figure~\ref{fig:angle_error_single_vs_multi_agent}, appropriate task
  decomposition enhances control precision and reliability. The smoothed trend
  lines indicate that the multi-agent system maintains more stable performance
  over time, while the single-agent system shows greater oscillation and some regression
  in later trials. Because the Validator imposes guardrails that reduce the
  single‐agent’s tendency to drift in parameter selection. For instance, we
  observed that without the Validator, the single-agent sometimes picks steering
  angles >1.0, leading to immediate collisions.

  The controller also demonstrated varying effectiveness across different vehicle
  dynamics. Table~\ref{tab:vehicle_comparison} shows that the sedan
  consistently outperformed the sports coupe across nearly all metrics, with 62.7\%
  lower mean angle error (9.09° vs. 21.39°) and 29.5\% higher success rate (90.11\%
  vs. 70.57\%).

  \begin{table}[ht]
    \centering
    \caption{Performance Comparison Between Sedan and Sports Coupe Models}
    \label{tab:vehicle_comparison} \scriptsize
    \begin{tabular}{lrr}
      \toprule \textbf{Metric}      & \textbf{Sedan} & \textbf{Sports Coupe} \\
      \midrule Mean Angle Error (°) & 9.09           & 21.39                 \\
      Median Angle Error (°)        & 5.35           & 6.99                  \\
      Min Angle Error (°)           & 0.00           & 0.12                  \\
      Max Angle Error (°)           & 179.98         & 178.78                \\
      Standard Deviation            & 24.04          & 44.24                 \\
      Success Rate (\%)             & 90.11          & 70.57                 \\
      Mean Jerk (m/s$^{3}$)         & 0.81           & 1.26                  \\
      Avg Max Jerk (m/s$^{3}$)      & 48.31          & 55.32                 \\
      Mean Yaw Rate (°/s)           & 16.14          & 13.82                 \\
      Steering Smoothness           & 0.39           & 0.44                  \\
      Avg Execution Time (s)        & 264.00         & 292.17                \\
      \bottomrule
    \end{tabular}
  \end{table}

  \begin{figure}[htbp]
    \centering
    \includegraphics[width=0.5\textwidth]{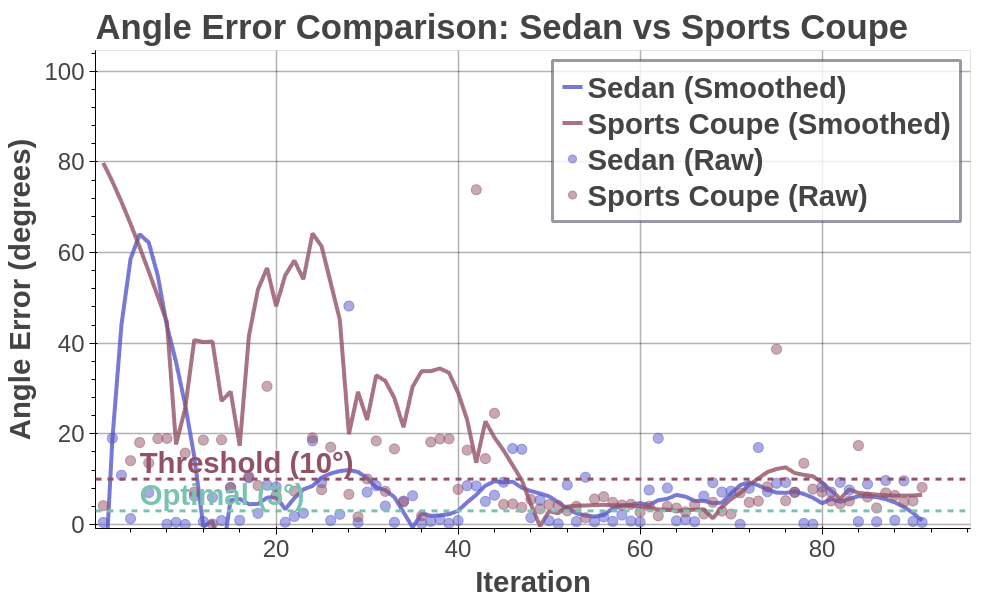}
    \caption{Comparison of angle error between sedan and sports coupe vehicle
    models during J-turn maneuvers. The angle error is calculated as the absolute
    difference between the actual turn angle and the ideal 180-degree turn.
    Lower values indicate better performance, with 3° considered optimal (green
    line) and 10° as the maximum acceptable threshold (red line). The smoothed
    lines represent the trend using a Savitzky-Golay filter, while individual data
    points show raw measurements. Angle error
    $\theta_{e}= |\theta_{\text{final}}- \theta_{\text{initial}}- 180^{\circ}|$
    consistently achieves sub-10° precision in the simulation environment.}
    \label{fig:angle_error_sedan_vs_sports_coupe}
  \end{figure}

  Figure~\ref{fig:angle_error_sedan_vs_sports_coupe} shows that both vehicles achieved
  near-perfect turns in their best trials, but the sports coupe showed
  substantially higher variability (SD: 44.24 vs. 24.04) despite also nearing
  optimal thresholds in later iterations. The sedan maintained performance below
  the $10^{\circ}$ threshold more consistently, while the sports coupe
  exhibited more frequent excursions beyond acceptable limits.

  \begin{figure}[t]
    \centering
    \includegraphics[width=\columnwidth]{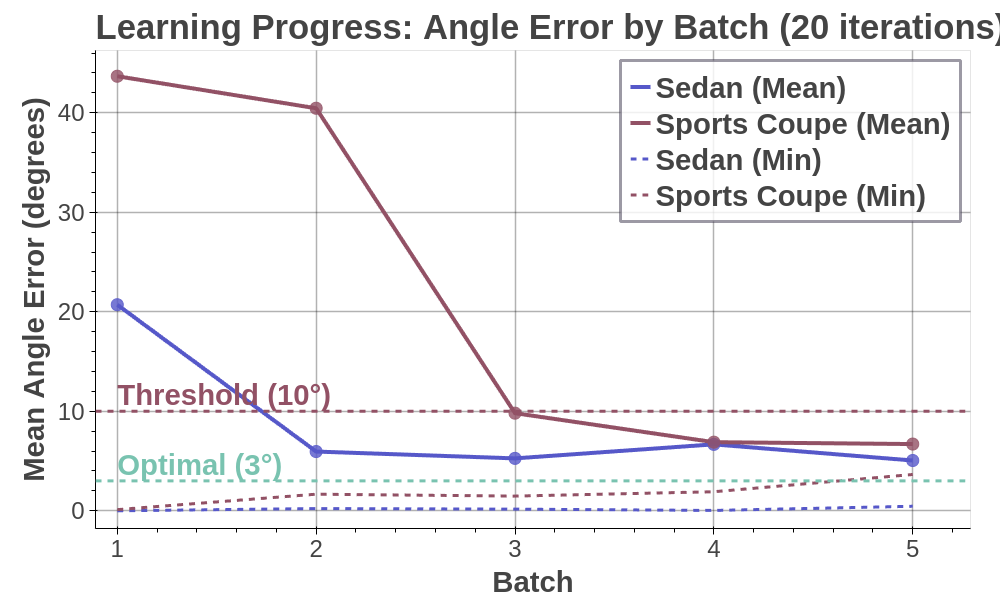}
    \caption{Learning progress of the steering controller for sedan and sports
    coupe vehicles across batches of 20 iterations. The plot shows the mean angle
    error (solid lines) and minimum angle error (dashed lines) for each batch,
    demonstrating how controller performance improves over time. The horizontal
    dashed lines at 3° and 10° represent the optimal and maximum acceptable error
    thresholds, respectively. }
    \label{fig:angle_error_learning_progress}
  \end{figure}

  Figure~\ref{fig:angle_error_learning_progress} shows the learning progress over
  5 batches (epochs) of 20 iterations. The controllers' performance gradually converges
  toward the optimal error level (3°) and remains below the 10° threshold as training
  progresses. Each iteration takes roughly five seconds using API calls, making
  the learning process remains practical online.

  These differences highlight a key insight: while our multi-agent architecture
  can adapt to different vehicle dynamics, its effectiveness varies based on inherent
  stability characteristics. The sedan's more forgiving dynamics allow greater
  error margins in control parameters, while the sports coupe's higher responsiveness
  amplifies small control errors into larger outcome differences.

  The sports coupe's shorter wheelbase, higher power-to-weight ratio, and rear-biased
  weight distribution make it more responsive but also more challenging to control
  precisely during high-dynamic maneuvers, explaining the performance disparity
  despite the controller's adaptive capabilities.


  \section{Conclusion}
  We have presented \emph{ManeuverGPT} , a multi-agent framework that utilizes
  LLMs for generating and refining high-dynamic stunt maneuvers such as J-turns
  in the CARLA simulator. Our research demonstrates that LLM-based controllers can
  effectively plan and execute complex vehicle maneuvers through iterative feedback
  without requiring internal parameter modifications. Our findings reveal that:
  (1) multi-agent architectures outperform single-agent implementations by 46\%
  in optimal execution rate; (2) the controller adapts to different vehicle dynamics,
  achieving 90.11\% success with sedans and 70.57\% with more challenging
  sports coupes; and (3) performance improves through structured feedback, with
  implementation success increasing from 83.3\% to 90\% over successive
  iterations.

  While our prompt-based approach enables the execution of complex maneuvers
  \emph{tabula rasa} and avoids retraining model weights, several challenges
  remain. The current implementation lacks formal safety guarantees that conventional
  MPC methods provide, and precise numeric control would benefit from hybrid
  approaches combining LLM reasoning with algorithmic optimization. Additionally,
  scaling to more complex traffic scenarios and bridging the simulation-to-reality
  gap both present significant hurdles.

  Future work should integrate explicit safety constraints into the LLM prompt
  context, develop automated re-prompting based on sensor data, and explore
  synergies between language-based high-level planning and specialized models
  for reactive control. This research establishes LLM-driven control as a promising
  approach for rapid prototyping and novel maneuver development when paired with
  appropriate validation frameworks, potentially expanding the envelope of
  autonomous vehicle capabilities for safety-critical evasive maneuvers.


  \section*{Acknowledgments}
  The authors would like to thank Dr.\ Bahram Behzadian for his invaluable
  insights and critical feedback on this manuscript. His thorough review and thoughtful
  suggestions greatly contributed to the quality of this work.

  \bibliographystyle{IEEEtran}
  \bibliography{references}
\end{document}